\documentclass[letterpaper]{article}
\usepackage[preprint]{preprintfmt}
\usepackage[hyphens]{url}
\usepackage{graphicx}
\usepackage{natbib}
\usepackage{caption}
\usepackage{float}
\usepackage{booktabs}
\usepackage{multirow}
\usepackage{amsmath,amssymb,amsthm}
\usepackage{xspace}

\newtheorem{proposition}{Proposition}
\newtheorem{definition}{Definition}
\newtheorem{assumption}{Assumption}

\newcommand{\ret}{\mathrm{ret}}
\newcommand{\safe}{\mathrm{safe}}
\newcommand{\mem}{M}
\newcommand{\query}{q}
\newcommand{\gain}{\Delta}
\newcommand{\feat}{\psi}

\title{Retain or Consolidate? Budget-Dependent Operator Selection for Language Agent Memory}
\author{Qingcan Kang$^{1*}$, Mingyang Liu$^{2*}$, Shixiong Kai$^{1}$,
Kaichao Liang$^{1}$,\\ Zhentao Tang$^{1}$, Yuqi Cui$^{1}$,
Tao Zhong$^{1}$, Mingxuan Yuan$^{1\dagger}$}
\affiliations{$^{1}$Huawei Noah's Ark Lab\\
$^{2}$Department of Computer Science, City University of Hong Kong\\
\texttt{\{kangqingcan, kaishixiong, liangkaichao, tangzhentao1\}@huawei.com}\\
\texttt{\{cuiyuqi1, zhongtao5, Yuan.Mingxuan\}@huawei.com}\\
\texttt{mingyaliu8-c@my.cityu.edu.hk}\\
$^{*}$Equal contribution. $^{\dagger}$Corresponding author.}

\begin{document}

\maketitle

\begin{abstract}
Memory allows language agents to draw on information from past interactions to support future reasoning and action. However, the finite context windows and inference costs of large language models (LLMs) limit how much stored information can be used. Existing agent systems primarily address this challenge through two memory-management strategies: retention and consolidation. Retention preserves exact details in raw records but may exclude relevant evidence under tight budgets; consolidation improves evidence coverage through compression but risks losing query-critical details. Neither strategy is universally preferable. This raises two coupled questions: \emph{when} should consolidation replace retention, and \emph{which} operator---\mbox{\textsc{Merge}}, \mbox{\textsc{Abstract}}, or \mbox{\textsc{Rewrite}}---should be used? We formalize the joint decision by decomposing each consolidation operator's value into a coverage effect on evidence omitted by retention and a signed replacement effect from replacing raw evidence that already fits. This balance explains why the preferred action changes with relative budget pressure. We operationalize the mechanism with Offline Abstraction-Safety (OAS), a lightweight learner that estimates action utilities from pre-generation features and uses held-out harm calibration to limit harmful replacements. Experiments on the public LongMemEval and LoCoMo benchmarks reveal a consistent budget-dependent pattern. On LongMemEval, consolidation improves accuracy by up to 48 percentage points under tight budgets, whereas retention is preferable under loose budgets, when most relevant raw evidence can be retained in full. LoCoMo reproduces this crossover at a smaller absolute budget, consistent with its shorter evidence. These results suggest that the choice between retention and consolidation depends on evidence length relative to the available budget, rather than on a fixed token threshold. Across both benchmarks, cross-note \textsc{Abstract} and \textsc{Merge} generally outperform local \textsc{Rewrite} when compression is necessary.

\end{abstract}

\section{Introduction}
\label{sec:intro}

Language agents built on large language models (LLMs) often operate beyond a
single prompt. They accumulate observations, dialogue, and retrieved facts to
support later decisions, but this history cannot grow without consequence. An
LLM has a finite context window, and longer inputs also increase inference cost
and latency before that limit is reached; relevant evidence can also be used
unreliably within long contexts \citep{liu2024lost}. A token budget is therefore a practical
deployment constraint. The memory system must fit useful evidence into that
budget without unnecessarily degrading the answer.

Existing systems commonly manage this constraint through two strategies:
\emph{memory retention} and \emph{memory consolidation}. Retention selects raw
records and leaves their contents unchanged. It preserves exact wording,
provenance, timestamps, and corrections, but uses tokens inefficiently when
evidence is redundant or distributed; under a tight budget, a relevant record may
not fit. Prior systems mix raw-record storage with hierarchy, forgetting,
summarization, or reflection
\citep{packer2023memgpt,zhong2024memorybank,park2023generative}.
\citet{oslmr} further formulate long-horizon retention as constrained stochastic
optimization and learn which raw notes to preserve under delayed miss,
reacquisition, and staleness costs. Consolidation instead generates a compact
replacement for one or more records. Summarizing, merging, or rewriting can remove
redundancy and increase evidence coverage per token, but generation may omit a
decisive detail, blur temporal order, discard a correction, or introduce
unsupported content. Mem0 and A-MEM generate structured memories, while RecMem
and TrustMem study recurrence-triggered consolidation and reliable transitions
\citep{chhikara2025mem0,xu2025amem,dai2026recmem,yang2026trustmem}.

These research lines establish both strategies, but they do not resolve their
joint decision. Retention research asks which original records to keep, whereas
consolidation research largely asks how to construct compact memories. It remains
unclear when generated compression should replace raw evidence and which
consolidation operator should be used. The answer depends on budget pressure. If
relevant raw evidence does not fit, consolidation may recover otherwise omitted
information. If that evidence already fits, replacing it may sacrifice exact
details without adding useful coverage. This tradeoff motivates our central
questions: \emph{when} should an agent consolidate, and \emph{which} operator---
\textsc{Merge}, \textsc{Abstract}, or \textsc{Rewrite}---should it choose?
We isolate this comparison by holding the query, candidate evidence, retriever,
answer model, and answer-time budget fixed while changing only the memory
representation.

We formulate retention and the three consolidation operators as a finite,
budget-dependent action set. Each action has a well-defined conditional expected
answer utility. The optimal action would be obtained by comparing these four
utilities, but the utilities of unexecuted actions are counterfactual and hence
unavailable at deployment; the obstacle is missing utility information, not a
difficult combinatorial solver. To explain how the preferred action changes, we
introduce an idealized mechanism model that decomposes each operator's value into
two terms: a \emph{coverage effect} on evidence omitted by retention and a signed
\emph{replacement effect} on raw evidence that already fits. Their balance changes
with relative budget pressure and determines both when consolidation is beneficial
and which operator provides the greatest expected utility.

We operationalize this formulation with Offline Abstraction-Safety (OAS), a
lightweight multi-action utility learner. OAS estimates the unavailable action
utilities from features observed before generation, including budget scale,
evidence-fit pressure, cluster geometry, and query type, and then applies a plug-in
maximizer. A held-out safety threshold limits harmful replacements. Linear
regression is the primary estimator because the available supervision favors a
low-capacity model. A multilayer perceptron (MLP) serves as a nonlinear capacity
control, testing whether ridge regression underfits the action-utility relationship.
This design uses learning where it is needed:
to estimate counterfactual utilities rather than to obscure a simple finite-action
optimization.

Experiments on the public LongMemEval and LoCoMo benchmarks support this account.
On LongMemEval, consolidation improves absolute accuracy over retention by as much
as $48\%$ under tight budgets, whereas retention leads every consolidation
operator by $8\%$--$11\%$ once the raw evidence largely fits. A second answer
model reproduces this reversal. LoCoMo exhibits the same pattern, but its crossover
occurs at a smaller absolute budget because its evidence is shorter. This result
supports relative budget pressure rather than a universal token threshold.
Operator comparisons address the \emph{which} question: cross-note
\textsc{Abstract} and \textsc{Merge} are generally more effective than local
\textsc{Rewrite} under compression, although neither cross-note operator dominates
in every setting. OAS improves the controlled when--which decision under
budget-stressed conditions. Its separately evaluated full-history variant does not
surpass the strongest fixed policy, which bounds our claim: the experiments
validate the budget-dependent mechanism more strongly than general-purpose
instance-level routing.

Our contributions are as follows.
\begin{itemize}
  \item \textbf{A unified when--which formulation.} We place retention and three
  generative consolidation operators in the same decision space with signed,
  budget-dependent utility, making the decision of whether and how to consolidate
  explicit in the problem formulation.
  \item \textbf{A testable budget-dependent mechanism.} An idealized surrogate
  decomposes consolidation value into a signed coverage effect and a replacement
  effect. Experiments on public benchmarks identify the predicted reversal
  and show that operator choice depends on budget pressure and evidence structure
  in the mechanism analysis and experiments.
  \item \textbf{A lightweight learner that operationalizes the formulation.} OAS
  learns multi-action utility from pre-generation features and uses held-out harm
  calibration. Controlled and independent evaluations distinguish successful
  budget-aware selection from the present boundary of transferable routing
  as detailed in the methodology and supplementary material.
\end{itemize}

\begin{figure}[t]
\centering
\includegraphics[width=0.98\linewidth]{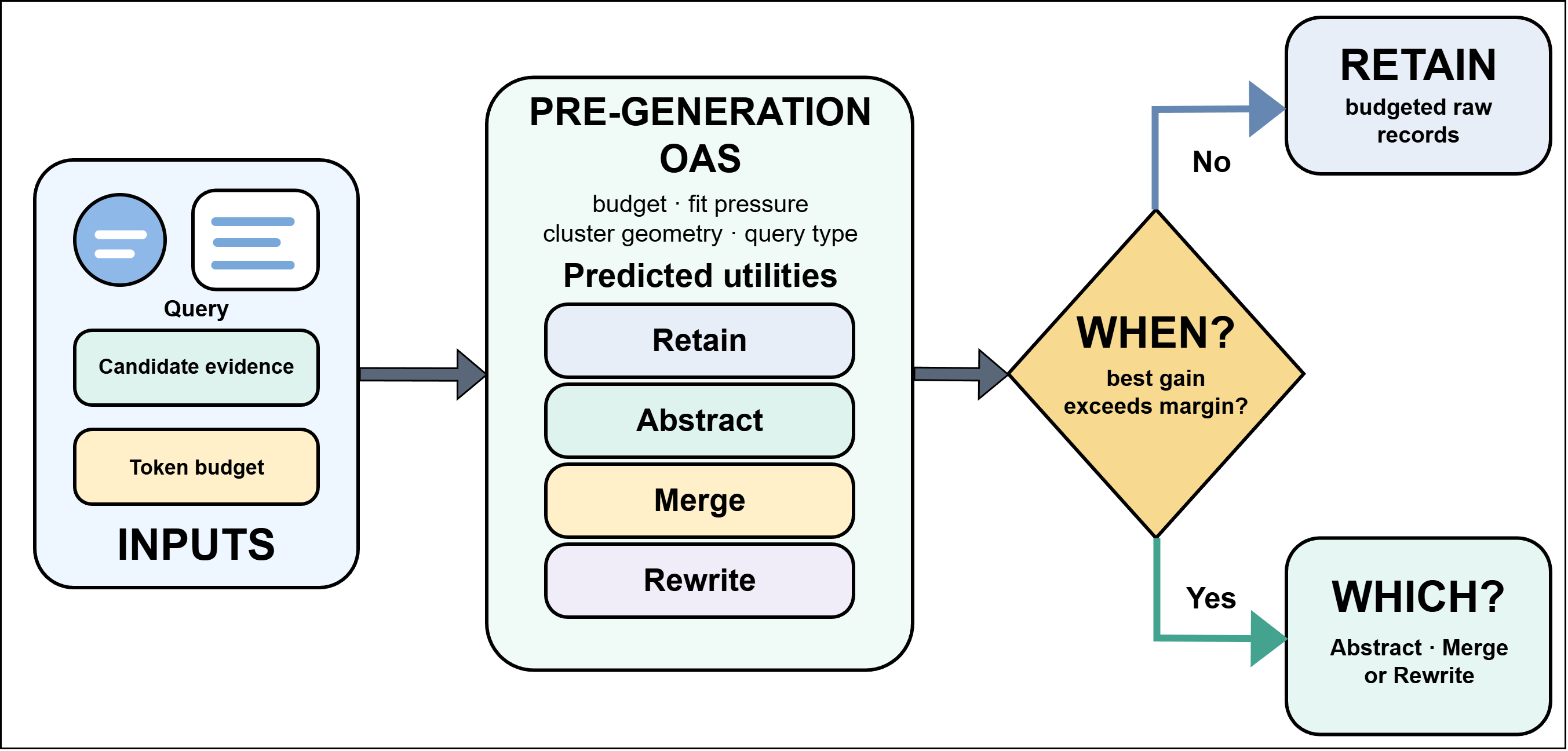}
\caption{\textbf{OAS workflow for deciding when to consolidate and which operator to use.}
From pre-generation features, OAS estimates retention and three consolidation
utilities. It consolidates when the best predicted gain exceeds a margin---zero
for Direct OAS and calibrated on held-out questions for Calibrated OAS---and
otherwise retains (\emph{when}); it then applies the highest-utility generative
operator (\emph{which}).}
\label{fig:compact-concept}
\end{figure}
 \section{Related Work}
\label{sec:related}

\paragraph{Agent memory systems and consolidation.}
Surveys of agent memory review representations, evaluation, and operations such
as consolidation, forgetting, retrieval, and compression
\citep{zhang2024memorysurvey,du2025rethinking}. Existing systems combine storage
and retrieval with hierarchy, reflection, summarization, or structured updates
\citep{packer2023memgpt,zhong2024memorybank,park2023generative}. More recent work
studies consolidation, linked notes, recurrence-based triggers, or transition
verification
\citep{chhikara2025mem0,xu2025amem,dai2026recmem,yang2026trustmem}.
These systems design pipelines or individual operators. We instead hold the
query, evidence, and answer-time budget fixed to isolate when transformation is
beneficial and which operator should perform it.

\paragraph{Retrieved-context and prompt compression.}
RECOMP learns extractive and abstractive compressors, while LLMLingua controls
prompt length through token-level compression
\citep{xu2023recomp,jiang2023llmlingua}. Our intervention instead replaces
episodic evidence with a generated memory and compares retention with multiple
semantic operators on matched instances. The objective is answer utility, not
compression rate: it determines \emph{when} to transform memory and \emph{which}
transformation to select.

\paragraph{Conversational-memory benchmarks.}
LongMemEval tests five memory abilities, including temporal reasoning and
knowledge updates; LoCoMo evaluates long-session question answering, event
summarization, and multimodal dialogue generation
\citep{wu2025longmemeval,maharana2024locomo}. Exact details matter for some
questions, whereas distributed evidence creates compression pressure, making both
benchmarks natural tests of our when--which mechanism.

\paragraph{Selectional versus generative memory.}
\citet{oslmr} formulate long-horizon retention as constrained sequential
optimization and learn which unaltered notes to keep. Our complementary decision
asks whether identified evidence should remain raw or be replaced by a generated
record. Classical budgeted selection has approximation guarantees under fixed
coverage or submodular objectives \citep{nemhauser1978,khuller1999}, but generation
can either improve coverage or reduce fidelity. We model this signed tradeoff with
operator-specific utilities and learn the resulting finite-action decision,
following the broader learning-assisted-algorithms principle
\citep{mitzenmacher2021}.
 \section{Methodology}
\label{sec:method}

We formalize memory consolidation as a one-step query-time decision under a token
budget imposed by context capacity or deployment cost. The objective is to use the
available context efficiently while maintaining answer utility. We formalize why
consolidation can either improve or reduce answer utility, and then
learn a lightweight utility router that decides when to consolidate and which operator to
apply from features available before generation. The presentation has three steps:
we first define the four-action decision, then derive an idealized mechanism for
its budget dependence, and finally describe how OAS learns the decision from
offline outcomes without observing generated records at test time.

\subsection{Problem Formulation}
\label{sec:decision}

\paragraph{Memory, budget, and retrieval.}
Let $\mem=\{m_i\}_{i=1}^N$ be an agent memory whose notes contain text, metadata,
and embeddings $\mathbf{e}_i$. A retriever $R_B$ maps a query $\query$ and a memory state to an
ordered subset $S=R_B(\query,\mem)$ whose token cost obeys the context budget
$B$; that is, the total cost of the selected notes does not exceed the budget.
The budget may reflect the model's context-window limit or a smaller operating
limit chosen to control inference cost and latency. An answer model then produces
a response from $(\query,S)$, which a fixed judge grades against the reference
answer to obtain the binary utility
$u(\query,S)\in\{0,1\}$. Within each paired comparison, retriever, answerer,
grader, and budget are fixed, so only the memory representation changes.

\paragraph{Generative consolidation operators.}
For a candidate evidence cluster $C\subseteq\mem$, the agent may apply a
\emph{generative} operator that emits a budget-targeted representation
$\tilde m_{o,B}(C)$ not present among the raw records. The generative family is
$\mathcal O_{\rm gen}=\{\textsc{Merge},\textsc{Abstract},\textsc{Rewrite}\}$.
\textsc{Merge} consolidates several notes into one compact factual record,
\textsc{Abstract} synthesizes a higher-level note across its inputs, and
\textsc{Rewrite} reformulates each note independently before packing the rewritten
notes under the same budget. For notation, the packed Rewrite output is treated as
one representation $\tilde m_{o,B}(C)$ while preserving its internal note boundaries.
These are distinct from \emph{selectional} operators that retain, discard, or
re-index existing raw content. Each generative operator in our study
\emph{replaces} the packed constituents in the answer context (freeing budget), yielding a
post-consolidation state versus the retain-as-is state $\mem$.

All three operators can change both coverage and fidelity, so each requires a
selection decision. We write $T_{C,o,B}(\mem)$ for replacing cluster $C$ with the
representation generated by operator $o\in\mathcal O_{\rm gen}$. The controlled experiment
studies a \emph{query-time packing} decision: after a query identifies a candidate
evidence cluster but before a compact record is generated, the policy chooses
retention or one generative operator. This scope makes query type observable and
does not claim to solve query-agnostic background consolidation. We analyze one
query--cluster decision at a time; interactions among repeated memory updates are
outside the present guarantee.

\begin{definition}[Consolidation gain]
\label{def:gain}
For a memory state $\mem$, candidate cluster $C$, query $\query$, budget $B$, and
operator $o$, the realized paired consolidation gain is
\begin{equation}
\label{eq:gain}
\begin{aligned}
  \gain_o(z)&=u_o(z)-u_{\ret}(z),\\
  u_o(z)&=u\!\left(\query,R_B(\query,T_{C,o,B}(\mem))\right).
\end{aligned}
\end{equation}
the change in graded answer utility from consolidating $C$ rather than retaining
it, under the same retriever, budget, answerer, and grader. This is a paired
run-level quantity and therefore takes values minus one, zero, or one. Generation,
answering, and grading randomness are averaged when we define expected action
utilities below.
\end{definition}

\paragraph{The decision problem.}
A \emph{packing policy} $\pi$ maps an observable instance
$z=(\mem,C,\query,B)$ to an action in
$\mathcal A=\{\ret\}\cup\mathcal O_{\rm gen}$. Let
$u_{\ret}(z)=u(\query,R_B(\query,\mem))$ and
$u_o(z)=u(\query,R_B(\query,T_{C,o,B}(\mem)))$ for a generative action $o$.
The value of a policy is simply the expected utility of its selected action,
where the expectation covers instances, generation, answering, and grading:
\begin{equation}
\label{eq:policy-value}
  V(\pi)=\mathbb E\big[u_{\pi(z)}(z)\big].
\end{equation}
Let $\pi_{\ret}$ denote the fixed policy that always retains the raw records.

\paragraph{Well-definedness and scope.}
For each instance, the candidate cluster $C$ is supplied by an upstream retriever
or evidence selector and is treated as fixed by the packing decision. Every action
produces a context whose token cost is at most $B$: retention packs complete raw
records, while each generated representation is capped at the same budget. The
realized action utilities are bounded binary random variables, so every conditional
mean $\mu_a(z)$ below exists. Because the action set is finite, at least one
utility-maximizing action always exists; ties are resolved toward retention. The
formulation therefore defines a valid one-step decision for any supplied
$(\mem,C,\query,B)$. It deliberately does not optimize candidate discovery or repeated
memory updates, which are upstream and sequential extensions, respectively.

\paragraph{The joint ``when'' and ``which'' decision.}
Let $\mu_a(z)=\mathbb E[u_a(z)\mid z]$ denote the conditional utility of action
$a\in\mathcal A$, and define the best generative advantage
\begin{equation}
\label{eq:joint-decision}
\begin{aligned}
  \Gamma(z)&=\max_{o\in\mathcal O_{\rm gen}}\mu_o(z)-\mu_{\ret}(z),\\
  o^\star(z)&\in\arg\max_{o\in\mathcal O_{\rm gen}}\mu_o(z).
\end{aligned}
\end{equation}
The two questions are therefore one decision: \emph{when} means consolidating
when $\Gamma(z)$ is positive (or exceeds a safety margin), and \emph{which} means applying
$o^\star(z)$ when consolidation is selected. This formulation does not assume that an
operator has a fixed global rank: both comparisons depend on the query, candidate
cluster, and budget contained in $z$.  OAS below estimates these action utilities
from pre-generation features.
Equivalently, with ties resolved toward retention, the optimal action is
\begin{equation}
\label{eq:optimal-action}
  a^\star(z)=
  \begin{cases}
    \ret, & \text{if }\Gamma(z)\leq 0,\\
    o^\star(z), & \text{if }\Gamma(z)>0.
  \end{cases}
\end{equation}

\paragraph{Why learning is required.}
If all four conditional utilities $\{\mu_a(z):a\in\mathcal A\}$ were known, the
problem would be solved exactly by comparing four numbers; no general-purpose
solver would be required. The deployment difficulty is statistical rather than
combinatorial. Before a decision, the generated records do not yet exist, and
their answer utilities are counterfactual and unknown; reference-based correctness
is also unavailable in normal deployment. Generating and scoring every alternative
would defeat low-cost routing, and a numerical solver cannot recover missing
objective values. We therefore evaluate all actions offline, learn their utilities
from pre-generation features, and use the estimates in the same four-action
maximization at deployment. OAS approximates the unknown objective, not a
computationally difficult argmax.

\subsection{An Idealized Mechanism for Budget Dependence}
\label{sec:mechanism}

We use a deliberately simplified proxy utility to isolate two effects of replacing
raw memories with a generated record. Here, \emph{surrogate} means an analyzable
mechanism model: it is designed to explain the direction of the observed accuracy
change, but is not assumed to equal the LLM-judged accuracy $u$ in
Definition~\ref{def:gain}. The construction applies separately to every generative
operator; we omit the operator subscript temporarily for readability. This model
describes the average budget effect over a query population. The instance-level
decision remains the conditional-utility problem in
Eq.~\eqref{eq:joint-decision} and is learned by OAS below.

Let $\mathcal Q$ denote the set of possible queries and let $q\sim F$ denote a query drawn
from the evaluation distribution. The query is the only random variable in this
surrogate; the memory set, budget, retriever, and generated record are held fixed.
The distribution $F$ represents how frequently queries occur; $\Pr_F$ and
$\mathbb E_F$ denote probability and expectation under $F$. We do not fit a
parametric density: in our benchmark analysis, $F$ is the uniform empirical
distribution over evaluation questions, so probabilities are question fractions
and expectations are question averages. For any note $m$ and query it supports,
the signed fidelity $\varphi_m(q)\in[-1,1]$ represents whether the note provides
accurate evidence or misleading evidence: positive values denote correct support,
negative values denote misleading support, and zero denotes no contribution. In this deliberately simplified model,
$s_B(q,M)$ is the signed fidelity contributed by the highest-ranked applicable
note retrieved from memory state $M$ under budget $B$; it is zero if no retrieved
note supports the query. We define the proxy
utility as the expected contribution
\begin{equation}
  \mathcal U_B(M)=\mathbb E_F[s_B(q,M)].
\end{equation}
This expectation is well defined because $s_B$ is bounded between minus one and
one. On a finite benchmark, it is simply the average contribution across queries.

For the fixed budget under analysis, let $H_B\subseteq\mathcal Q$ be the queries
supported by the generated record $\tilde m_B$, and assume that $H_B$ has positive
probability under $F$. Let $G_B\subseteq\mathcal Q$ be the raw-grounded region:
the queries for which retention retrieves an applicable constituent of $C$.
Define $X_B=H_B\setminus G_B$ as the queries supported by the generated record but
missed by retention. Under nested raw packing, increasing the budget can only
enlarge $G_B$; the generated coverage $H_B$ may also vary with the target budget.

\begin{assumption}[Localized retrieval change]
\label{ass:uncovered}
Replacing $C$ by $\tilde m_B$ leaves the retrieved contribution unchanged outside
$H_B$; on $X_B$ retention contributes zero and consolidation retrieves $\tilde m_B$.
Within $H_B\cap G_B$, let $J_B$ be the replacement region on which retrieval
changes from an incumbent raw note $m_{\rm raw}(q)$ to $\tilde m_B$.
\end{assumption}

This assumption localizes the intervention: replacing $C$ may change evidence
only for queries supported by the generated record. It is used solely to obtain
the exact two-term decomposition in Proposition~\ref{prop:decomp}; OAS and the
empirical comparisons do not require it. The assumption is restrictive because it
excludes retrieval or packing spillovers outside $H_B$. Without it, the utility
change contains an additional residual spillover term.

\begin{proposition}[Operator-indexed budget decomposition]
\label{prop:decomp}
For each $o\in\mathcal O_{\rm gen}$, let $H_{B,o}$ be the queries supported by
$\tilde m_{o,B}$, let $X_{B,o}=H_{B,o}\setminus G_B$, and let
$J_{B,o}\subseteq H_{B,o}\cap G_B$ be its replacement region. Under
Assumption~\ref{ass:uncovered}, the operator's
surrogate-utility change
$\Delta\mathcal U_{B,o}=\mathcal U_B(T_{C,o,B}(\mem))-\mathcal U_B(\mem)$ is
\begin{equation}
\label{eq:decomp}
\begin{aligned}
\Delta\mathcal U_{B,o}&=C_{B,o}+R_{B,o},\\
 C_{B,o}&=\Pr_F(q\in X_{B,o})\,
 \mathbb E_F[\varphi_{\tilde m_{o,B}}(q)\mid q\in X_{B,o}],\\
 R_{B,o}&=\Pr_F(q\in J_{B,o})\,
 \mathbb E_F[\varphi_{\tilde m_{o,B}}(q)\\
&\hspace{45pt}-\varphi_{m_{\rm raw}(q)}(q)\mid q\in J_{B,o}].
\end{aligned}
\end{equation}
Here $C_{B,o}$ and $R_{B,o}$ are the coverage and replacement effects,
respectively.
Each term is defined as zero when its conditioning event has probability zero.
The coverage effect is positive when $X_{B,o}$ has positive probability and the
generated record has positive mean fidelity there. The replacement effect is
nonpositive if the generated record has no greater fidelity than the incumbent raw
record throughout $J_{B,o}$.
\end{proposition}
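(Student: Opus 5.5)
The plan is to write $\Delta\mathcal U_{B,o}$ as a single expectation of a pointwise difference and then partition the query space. By linearity of $\mathbb E_F$, which is legitimate because $s_B$ is bounded,
\[
\Delta\mathcal U_{B,o}=\mathbb E_F\big[s_B(q,T_{C,o,B}(\mem))-s_B(q,\mem)\big].
\]
Write $D_o(q)$ for the integrand. I split $\mathcal Q$ into three disjoint pieces: $\mathcal Q\setminus H_{B,o}$, $X_{B,o}=H_{B,o}\setminus G_B$, and $H_{B,o}\cap G_B$. The last piece is refined into $J_{B,o}$ and $(H_{B,o}\cap G_B)\setminus J_{B,o}$. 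Since the indicators of these sets sum to one, $\Delta\mathcal U_{B,o}$ equals the sum of $\mathbb E_F[D_o(q)\mathbf 1\{q\in\cdot\}]$ over the pieces.

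\textbf{Outside $H_{B,o}$.} Assumption~\ref{ass:uncovered} says the retrieved contribution is unchanged, so $D_o\equiv 0$ there and this piece contributes nothing.

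\textbf{On $X_{B,o}$.} The assumption gives that retention contributes zero and consolidation retrieves $\tilde m_{o,B}$. Hence $D_o(q)=\varphi_{\tilde m_{o,B}}(q)$, because $s_B$ is the signed fidelity of the top applicable retrieved note.

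\textbf{On $J_{B,o}$.} By definition the top applicable note switches from $m_{\rm raw}(q)$ to $\tilde m_{o,B}$, so $D_o(q)=\varphi_{\tilde m_{o,B}}(q)-\varphi_{m_{\rm raw}(q)}(q)$.

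\textbf{On $(H_{B,o}\cap G_B)\setminus J_{B,o}$.} Here I read the definition of $J_{B,o}$ as the full region where retrieval changes inside $H_{B,o}\cap G_B$, so the top applicable note is unchanged and $D_o=0$.

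This last point is the main obstacle. The statement only says $J_{B,o}$ is "the" replacement region, so I would make this reading explicit. I would also note that if retrieval could change to some note other than $\tilde m_{o,B}$, that change would be a spillover residual, which the assumption excludes as remarked after its statement.

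Next I convert each nonzero piece to the stated form using $\mathbb E_F[Y\mathbf 1_A]=\Pr_F(A)\,\mathbb E_F[Y\mid A]$ whenever $\Pr_F(A)>0$. When $\Pr_F(A)=0$, the expectation $\mathbb E_F[Y\mathbf 1_A]$ is itself zero because $Y$ is bounded, which matches the convention that the term is defined as zero. This yields $C_{B,o}$ and $R_{B,o}$ exactly.

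The sign claims are then immediate.
\begin{itemize}
\item If $\Pr_F(X_{B,o})>0$ and the conditional mean fidelity of $\tilde m_{o,B}$ on $X_{B,o}$ is positive, then $C_{B,o}$ is a product of two positive numbers.
\item If $\varphi_{\tilde m_{o,B}}\le\varphi_{m_{\rm raw}(q)}$ pointwise on $J_{B,o}$, then the conditional expectation of a nonpositive quantity is nonpositive, so $R_{B,o}\le 0$. In the zero-probability case $R_{B,o}=0$, which is still nonpositive.
\end{itemize}

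Finally, the argument never uses anything specific to the operator, so it applies verbatim to each $o\in\mathcal O_{\rm gen}$.
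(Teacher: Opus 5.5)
Your proposal is correct and follows essentially the same route as the paper's proof. You form the pointwise difference $s_B(q,T_{C,o,B}(\mem))-s_B(q,\mem)$, use Assumption~\ref{ass:uncovered} to show it vanishes outside $H_{B,o}$ and on the unchanged part of $H_{B,o}\cap G_B$, equals $\varphi_{\tilde m_{o,B}}(q)$ on $X_{B,o}$ and the fidelity difference on $J_{B,o}$, and then take expectations via conditional expectation; your explicit reading of $J_{B,o}$ as the whole region where retrieval changes inside $H_{B,o}\cap G_B$, and your handling of the zero-probability convention, simply make precise what the paper uses implicitly.
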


The proof partitions queries into $X_{B,o}$, $J_{B,o}$, and the unchanged
remainder and then takes expectations; full details are in the supplementary
material.

Proposition~\ref{prop:decomp} gives the value of each generative action separately.
The next result converts these operator-indexed gains into the joint
\emph{when}--\emph{which} decision.

\begin{proposition}[Joint rule for when and which]
\label{prop:joint}
Let $\Delta\mathcal U_{B,o}$ be the operator-specific value from
Proposition~\ref{prop:decomp}. For a fixed cluster, budget, and query distribution,
assign retention gain zero and let
$o^\star\in\arg\max_{o\in\mathcal O_{\rm gen}}\Delta\mathcal U_{B,o}$.
The optimal surrogate action is retention when
$\Delta\mathcal U_{B,o^\star}\leq 0$, and $o^\star$ otherwise. Therefore,
\emph{when} asks whether the best consolidation operator improves over retention,
whereas \emph{which} selects the operator with the largest positive gain. A
zero-gain tie is resolved toward retention; ties among positive-gain operators may
be resolved arbitrarily.
\end{proposition}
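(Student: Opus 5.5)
The argument is a direct finite-action maximization in the surrogate utility. I would first fix the cluster $C$, budget $B$, and query distribution $F$, and define the surrogate value of each action $a\in\mathcal A$ relative to retention. Retention has value $\mathcal U_B(\mem)$, so its gain is zero by definition. Each generative operator $o$ has value $\mathcal U_B(T_{C,o,B}(\mem))=\mathcal U_B(\mem)+\Delta\mathcal U_{B,o}$. The decomposition in Proposition~\ref{prop:decomp} is not needed for the argument itself; it only identifies each gain as $C_{B,o}+R_{B,o}$. Because $\mathcal U_B(\mem)$ is a common additive constant across all four actions, maximizing surrogate utility over $\mathcal A$ is equivalent to maximizing the gain vector $(0,\Delta\mathcal U_{B,\textsc{Merge}},\Delta\mathcal U_{B,\textsc{Abstract}},\Delta\mathcal U_{B,\textsc{Rewrite}})$.

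Next I would split the maximization over $\mathcal A=\{\ret\}\cup\mathcal O_{\rm gen}$ into two stages. The inner stage maximizes over the finite set $\mathcal O_{\rm gen}$. An argmax $o^\star$ exists because the set has three elements and every gain is finite, since $s_B\in[-1,1]$ implies $|\Delta\mathcal U_{B,o}|\le 2$. The outer stage compares the inner maximum with retention:
\[
\max_{a\in\mathcal A}\text{gain}(a)=\max\bigl\{0,\ \Delta\mathcal U_{B,o^\star}\bigr\}.
\]
This step is the elementary identity that the maximum over a union equals the maximum of the maxima over its parts.

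Then I would read off the cases. If $\Delta\mathcal U_{B,o^\star}<0$, retention is the unique maximizer. If $\Delta\mathcal U_{B,o^\star}=0$, retention ties with $o^\star$ and possibly with other zero-gain operators, and the stated tie-breaking rule selects retention. If $\Delta\mathcal U_{B,o^\star}>0$, every maximizer lies in $\mathcal O_{\rm gen}$ and achieves the maximal gain, so choosing $o^\star$, or any operator tied with it, is optimal. This justifies the clause allowing arbitrary tie-breaking among operators with positive gain. The ``when'' clause, consolidate iff $\Delta\mathcal U_{B,o^\star}>0$, corresponds to the sign of the outer comparison. The ``which'' clause corresponds to the inner argmax, and it matters only when the outer comparison is positive.

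The only delicate point is that the choice of $o^\star$ must not affect the ``when'' decision. I would handle this by noting that the value $\Delta\mathcal U_{B,o^\star}=\max_o\Delta\mathcal U_{B,o}$ is independent of which maximizer is chosen. The remaining content is bookkeeping of ties, consistent with the tie convention already adopted in Eq.~\eqref{eq:optimal-action}.
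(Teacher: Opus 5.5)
Your proposal is correct and follows the same route as the paper's proof: it compares the maximal operator gain $\max_{o}\Delta\mathcal U_{B,o}$ with retention's zero gain and splits on its sign, resolving zero-gain ties toward retention. The extra bookkeeping makes explicit what the paper's short argument leaves implicit: the common constant $\mathcal U_B(\mem)$, the max-over-a-union identity, and the observation that the ``when'' test depends only on the maximal value and not on which maximizer $o^\star$ is chosen.
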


The result follows by comparing the three operator gains with retention's zero
gain; a formal proof is provided in the supplementary material.

\paragraph{Interpretation and falsifiable implications.}
Equation~\eqref{eq:decomp} does not guarantee a crossover. As $B$ decreases,
$G_B$ shrinks; consolidation helps only when its coverage effect outweighs any
negative replacement effect. It also predicts that harm should concentrate where
compression reduces fidelity, such as timestamps, corrections, and entity
bindings. The experiments test these directional implications but do not identify
the latent fidelity functions or equate the surrogate with LLM answer accuracy.
Which named operator has the largest gain remains an empirical question.

\subsection{Offline Abstraction-Safety Learning}
\label{sec:oas}

The mechanism indicates that the optimal decision depends on unobservable quantities (the true
fidelity of a not-yet-generated compact record). Offline Abstraction-Safety (OAS)
makes the decision learnable by
recovering supervision \emph{in hindsight} and fitting a lightweight router that acts on
features available \emph{before} generation.

\paragraph{Labeling.}
For each training instance $z=(\mem,C,q,B)$, we evaluate retention and every generative
action under the same retriever, budget, answer model, and grader. This produces
action utilities $u_a(z)$, paired gains from Eq.~\eqref{eq:gain}, and a harm event
when a generated action has lower utility than retention. These outcomes are not semantic ground truth; they are
system-specific supervision aligned with the downstream answer metric. They are
observed only offline and are never used as test-time features.

\paragraph{Observability separation.}
The query and candidate evidence are observed, but the decision is made
\emph{before} the compact record is generated and its answer utility is known.
OAS fits a utility router on pre-generation features and supervises it with outcomes recovered
offline. This is query-time adaptive packing, not query-agnostic memory maintenance.

\paragraph{Online-observable feature map.}
Each observed instance $z$ is represented by an 11-dimensional feature vector
$\feat(z)\in\mathbb{R}^{11}$. Seven entries are scalar features: normalized budget
$B/512$, constituent count, token pressure (candidate-evidence cost divided by
$B$), the budget-fit fraction $f_B$ (the share of candidate evidence
packed by retention), the number of represented sessions, an \emph{inconsistency
proxy} measured by the mean squared distance of note embeddings from their cluster
mean, and mean pairwise cosine similarity. The remaining four entries form a
one-hot encoding of the benchmark query type. The constant 512 is used only to
place the budget feature on a convenient numerical scale. In the controlled oracle-evidence protocol these are
available before generation; candidate-evidence cost and benchmark type would need
to be estimated by a retriever and type classifier in an end-to-end deployment.

\paragraph{Direct utility router and hypothesis class.}
Rather than decompose the decision into a binary when-head and a separately fitted
operator heuristic, OAS directly predicts the utility of every action. For each
$a\in\mathcal A$, it fits a regularized linear response with parameter vector
$\theta_a\in\mathbb R^{11}$ and intercept $b_a\in\mathbb R$:
\begin{equation}
\label{eq:gate}
  \widehat\mu_a(z)=b_a+\langle\theta_a,\feat(z)\rangle,
\end{equation}
and deploys $\widehat\pi(z)=\arg\max_{a\in\mathcal A}\widehat\mu_a(z)$. Thus ``when'' is the
comparison with predicted retention utility and ``which'' is the largest predicted
generated-action utility; both are learned from the same low-capacity model.

\paragraph{Utility regression.}
Offline paired evaluation supplies $u_{a,i}\in[0,1]$ for action $a$ on training
instance $i$. We fit the four responses by ridge regression,
\begin{equation}
\label{eq:erm}
\begin{aligned}
  (\widehat b_a,\widehat\theta_a)
  =\arg\min_{b,\theta}\;&\frac{1}{n}\sum_{i=1}^{n}
  \big(u_{a,i}-b-\langle\theta,\feat(z_i)\rangle\big)^2\\
  &+\lambda\lVert\theta\rVert_2^2 .
\end{aligned}
\end{equation}
Here, $n$ is the number of training instances and $\lambda\geq 0$ is the
regularization strength; the intercept is not penalized. The value of $\lambda$ is selected by inner grouped cross-validation using
training questions only, after which the utility functions are frozen.  This
direct objective respects effect magnitude and avoids treating abundant zero-gain
ties as positive or negative classification examples.

Accurate action-utility estimates are sufficient for accurate selection: if every
predicted utility is within $\epsilon$ of its conditional target, plug-in
maximization loses at most $2\epsilon$ in conditional utility
(see the supplementary material). This standard
comparison does not require ridge regression to be correctly specified. We use
the linear form as a regularized, data-efficient inductive bias, not as an
assumption that the true decision boundary is linear. A one-hidden-layer MLP is
evaluated under the same grouped protocol to test whether additional nonlinear
capacity improves utility estimation. The experiments and supplementary material report that it
does not improve reliably over ridge.

\paragraph{Risk-controlled operating threshold.}
Let $o^+(z)=\arg\max_{o\in\mathcal O_{\rm gen}}\widehat\mu_o(z)$ be the generated
action with the highest predicted utility, and define its predicted advantage
$s(z)=\widehat\mu_{o^+(z)}(z)-\widehat\mu_{\ret}(z)$.  Given the frozen utility
router, a separate operating threshold $\tau_{\safe,B}$ is chosen for each budget on a held-out
validation split. Candidate thresholds are the observed validation advantages,
together with an abstain-all option. Calibration first restricts attention to
thresholds whose empirical frequency of harmful
generated selections satisfies a specified harm constraint, and then selects the
threshold with the highest validation accuracy. We require zero observed harmful
selections on the calibration split. The abstain-all option guarantees feasibility. The unconstrained router
selects the action with the largest predicted utility; the calibrated variant
selects $o^+(z)$ only when its predicted advantage reaches the budget-specific
threshold. Otherwise, it retains the raw notes. Because the candidate thresholds
form a nested acceptance family, calibration requires only a sweep over validation
scores. Ties in validation accuracy are resolved toward the more conservative
threshold. This is a conservative operating rule, not a population safety
certificate; the experiments therefore report realized help and harm on disjoint
questions.

\section{Experiments}
\label{sec:experiments}

We evaluate every operator across all budgets on LongMemEval and on a fixed,
outcome-blind, evidence-length-stratified LoCoMo sample. The two datasets have
different evidence-length distributions, allowing us to test whether the
consolidation boundary follows observable budget pressure rather than a fixed token
value.

Our experiments answer three questions. \textbf{(Q1)} Does the value of
consolidation reverse sign with the memory budget? \textbf{(Q2)} Does the best
generative operator (\textsc{Merge}/\textsc{Abstract}/\textsc{Rewrite}) depend on
the budget? \textbf{(Q3)} Can a shared lightweight scorer learn the joint decision of
\emph{when} to consolidate and \emph{which} operator from pre-generation features alone?
Q1 tests the budget-dependent sign predicted by
Proposition~\ref{prop:decomp}; Q2 tests its operator-indexed comparison; a
fit-stratified diagnostic probes the coverage and replacement effects; and Q3
evaluates the learned approximation to Eq.~\eqref{eq:joint-decision}.

\paragraph{Setup.}
LongMemEval~\citep{wu2025longmemeval} is the primary benchmark. Its pool is split
into fit, development, and test roles before modeling, and routing further
separates utility fitting, threshold calibration, and testing by question. In the
controlled evaluation, every action receives the same gold evidence and differs
only in its budgeted representation. Retention packs complete raw items, whereas
\textsc{Abstract}, \textsc{Merge}, and \textsc{Rewrite} produce the representations
defined in the problem formulation. This pairing isolates the when--which
decision from evidence discovery.

LongMemEval uses budgets of 32, 64, 128, and 256 tokens. A single evidence turn has
a median cost of 59 tokens, while retention fully fits all required evidence for
$1.3\%$, $9.3\%$, $52.0\%$, and $85.3\%$ of the test questions at these budgets.
We define budget regimes by this full-fit rate rather than by the absolute token
value: 32 and 64 tokens are severely tight and tight, 128 tokens is the transition
regime, and 256 tokens is loose. LoCoMo
\citep{maharana2024locomo} provides an outcome-blind, evidence-length-stratified
replication with shorter evidence and correspondingly smaller budgets of 16, 32,
64, and 128 tokens. Every generated representation is deterministically capped at
its target budget. Full evidence distributions and implementation details are in
the supplementary material.

Answers are generated and graded with DeepSeek-V3.2 under a common prompt,
with three paired realizations on LongMemEval. Statistical comparisons average
repeats within question and use question-level paired bootstrap intervals and
randomization tests. A GLM-5.2 replication tests answer-model sensitivity. An
additional full-history evaluation supplies both methods with the same candidates
from BM25, a sparse lexical retriever that scores term matches with adjustments for
term rarity and turn length \citep{robertson2009bm25}. These robustness protocols
are detailed in the supplementary material.

\subsection{The Budget Crossover (Q1)}
\label{sec:crossover}

Figure~\ref{fig:crossover} reports the budget sweep. Retention accuracy decreases
from $74.5\%$ at the loose budget to $3.9\%$ at the severely tight budget as fewer
raw evidence turns fit. Every consolidation operator has a higher accuracy point
estimate than retention in the tight and transition regimes, whereas retention has
the highest accuracy at the loose budget. For the representative
\textsc{Abstract} comparison, the tight-budget absolute accuracy gain is $48.0\%$,
with a bootstrap interval from $37.3\%$ to $58.2\%$; its loose-budget difference
is $-8.0\%$, with an interval from $-16.9\%$ to $0.5\%$. The corresponding loose-budget interval for
\textsc{Merge} excludes zero in the negative direction
(supplementary paired inference). The reversal therefore supports the directional
prediction of Proposition~\ref{prop:decomp}: reducing the budget contracts the
region grounded by raw retention and increases the potential coverage effect.
The sweep does not identify the latent terms separately; that implication is tested
by the fit-stratified diagnostic below.

\begin{figure}[t]
\centering
\includegraphics[width=0.74\linewidth]{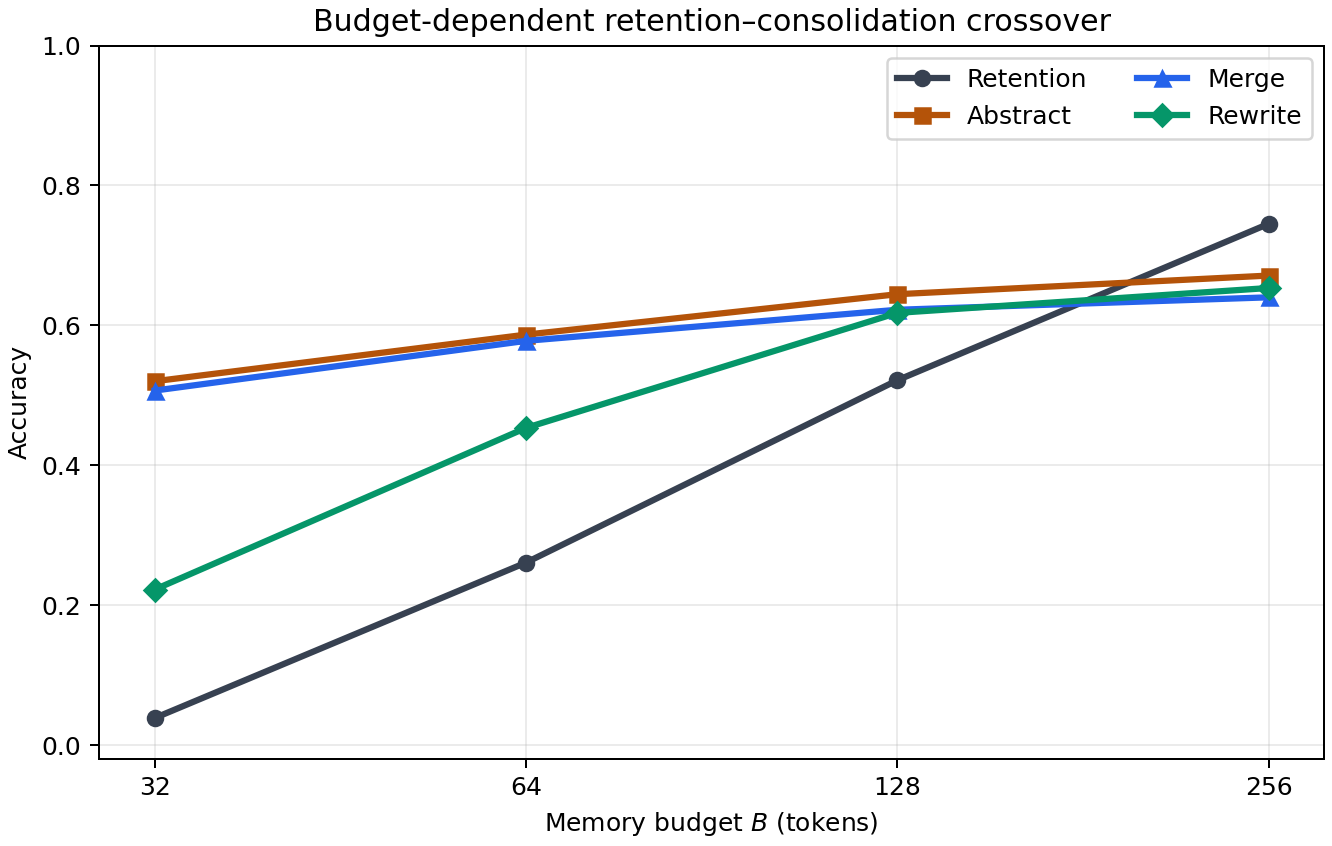}
\caption{\textbf{The budget crossover across consolidation operators.} Mean
accuracy of retention, \textsc{Abstract}, \textsc{Merge}, and \textsc{Rewrite}
on the LongMemEval test partition with three label realizations. At 32 tokens,
\textsc{Abstract} reaches $52.0\%$ accuracy versus $4.0\%$ for its paired
retention baseline, an absolute gain of $48.0\%$. Every consolidation operator
leads retention in the tight regimes, and the advantage narrows in the 128-token
transition regime. At the loose 256-token budget, retention exceeds every
consolidation operator. Thus consolidation is favored when raw evidence does not
fit, whereas retention is favored when most evidence already fits. Question-level paired uncertainty is reported in
the supplementary material. The plotted retention curve averages all paired
retention realizations; operator effects and confidence intervals use each
operator's within-run retention counterpart.}
\label{fig:crossover}
\end{figure}

\paragraph{Cross-model replication.}
The complete retain-versus-abstract sweep with GLM-5.2 reproduces the same answer
to \emph{when}: abstraction improves accuracy in the severely tight regime and is
worse than retention in the loose regime. Full results are reported in
the supplementary material.

\paragraph{Full-history retrieval evaluation.}
With identical lexically retrieved candidates, abstraction again improves over
retention in the tight regimes, while its advantage disappears as the budget
loosens. This auxiliary result shows that the budget interaction survives retrieval
noise; it does not establish superiority over optimized end-to-end memory systems.
Full results and paired uncertainty are in
the supplementary material.

\begin{table}[t]
\centering
\scriptsize
\begin{tabular}{@{}cccc@{}}
\toprule
Operator & $B$ & Retention fit & Gain (help/harm; $n$) \\
\midrule
\multirow{4}{4.8em}{\centering \textsc{Abstract}} & 128 & Incomplete & $+.343$ (.38/.04; 36) \\
 & 128 & Complete & $-.043$ (.06/.10; 39) \\
 & 256 & Incomplete & $-.061$ (.21/.27; 11) \\
 & 256 & Complete & $-.083$ (.07/.16; 64) \\
\midrule
\multirow{4}{4.8em}{\centering \textsc{Merge}} & 128 & Incomplete & $+.324$ (.41/.08; 36) \\
 & 128 & Complete & $-.145$ (.06/.21; 39) \\
 & 256 & Incomplete & $+.152$ (.27/.12; 11) \\
 & 256 & Complete & $-.151$ (.04/.19; 64) \\
\midrule
\multirow{4}{4.8em}{\centering \textsc{Rewrite}} & 128 & Incomplete & $+.306$ (.39/.08; 36) \\
 & 128 & Complete & $-.094$ (.08/.17; 39) \\
 & 256 & Incomplete & $-.061$ (.18/.24; 11) \\
 & 256 & Complete & $-.089$ (.07/.16; 64) \\
\bottomrule
\end{tabular}
 \caption{\textbf{Fit-stratified mechanism diagnostic.} Paired gain according to
whether retention packs all candidate evidence; parentheses report helped and
harmed rates and stratum size. The table uses the transition and loose budgets,
where both fit strata are sufficiently populated for comparison. At 32 and 64
tokens, complete fit occurs for only one and seven questions, respectively; their
aggregate tight-budget results appear in Figure~\ref{fig:crossover}.}
\label{tab:mechanismfit}
\end{table}

\subsection{Which Operator? Budget-Dependent Tradeoffs (Q2)}
\label{sec:operator2d}

The operator leader changes with budget. In both tight regimes,
\textsc{Abstract} ranks first, followed by \textsc{Merge}, \textsc{Rewrite}, and
retention; under the loose budget, retention ranks first and every generated
operator has a negative paired gain. The two cross-note operators outperform local
\textsc{Rewrite} under compression, but their direct difference is not
significant. Thus \emph{which} is not a universal named operator: cross-note fusion
is preferred for fragmented evidence, while the choice between abstraction and
merging remains instance dependent. Full pairwise tests and per-type results are
in the supplementary material.

\subsection{Mechanism Diagnostic: Coverage and Replacement}
\label{sec:mechanism-diagnostic}
We test the surrogate's observable implication by stratifying outcomes on the
retention fit fraction $f_B$. Table~\ref{tab:mechanismfit} shows that every
operator gains when some evidence does not fit but loses when all evidence fits in
the transition regime; the full-fit loss persists under the loose budget. This
coverage--replacement interaction supports Proposition~\ref{prop:decomp} without
claiming to identify its latent fidelity terms.

\subsection{A Unified Gate for When and Which (Q3)}
\label{sec:unified}

OAS is evaluated by grouped five-fold cross-fitting: utility fitting, threshold
calibration, and testing use disjoint questions, and all budgets of a question
remain in one fold. Table~\ref{tab:unified} shows significant gains over retention
in the three stressed regimes and no difference under the loose budget. Selected
actions shift from cross-note operators toward retention as evidence fit grows,
thereby realizing both \emph{when} and \emph{which}. Calibration reduces harmful
replacements at some utility cost; its empirical constraint is not a population
safety certificate.

\begin{table}[t]
\centering
\scriptsize
\begin{tabular}{@{}ccccc@{}}
\toprule
\multicolumn{1}{c}{Policy} & $B$ & Accuracy & Gain [95\% CI] & Harm \\
\midrule
\multirow{4}{4.8em}{\centering Direct\\OAS} & 32 & .545 & $+.507$ [.404,.607] & 3 \\
 & 64 & .618 & $+.357$ [.227,.483] & 12 \\
 & 128 & .640 & $+.119$ [.015,.224] & 14 \\
 & 256 & .745 & $.000$ [$-.061$,.064] & 10 \\
\midrule
\multirow{4}{4.8em}{\centering Calibrated\\OAS} & 32 & .487 & $+.449$ [.345,.551] & 1 \\
 & 64 & .403 & $+.142$ [.056,.234] & 3 \\
 & 128 & .613 & $+.092$ [.034,.160] & 2 \\
 & 256 & .742 & $-.003$ [$-.009$,.000] & 1 \\
\bottomrule
\end{tabular}
 \caption{\textbf{Joint when--which routing.} Direct OAS selects the action with
the largest predicted utility. Calibrated OAS applies a budget-specific harm
threshold fitted on disjoint questions and otherwise retains the raw evidence.
Gain is relative to retention; Harm counts lower-utility generated selections.
All results use grouped five-fold cross-fitting.}
\label{tab:unified}
\end{table}

\subsection{Cross-Dataset Replication on LoCoMo}
\label{sec:locomo}
LoCoMo replicates both conclusions. In its severely tight regime,
\textsc{Abstract} and \textsc{Merge} attain $48.0\%$ accuracy, compared with
$12.9\%$ for retention and $26.7\%$ for \textsc{Rewrite}; their paired gains remain
significant in both tight regimes. The advantage disappears once 64 tokens
accommodate the evidence for approximately three quarters of the sample. Because
LoCoMo evidence is shorter, this earlier crossover supports relative fit pressure,
as predicted by Proposition~\ref{prop:decomp}, rather than an absolute token
threshold. It also replicates the \emph{which} result: cross-note synthesis is
preferred under compression, without establishing universal dominance between
\textsc{Abstract} and \textsc{Merge}. Complete operator results, paired inference,
router controls, and the MLP comparison are in the supplementary material.

\section{Conclusion}
\label{sec:conclusion}

We formulate memory management as a joint, budget-dependent when--which decision.
Its coverage and replacement effects explain the consistent pattern across
LongMemEval and LoCoMo: consolidation is preferred while relevant raw evidence
does not fit, retention becomes preferable after it fits, and cross-note synthesis
is more effective than local rewriting under compression. OAS operationalizes
this mechanism through lightweight multi-action utility learning. The results
establish relative budget pressure, rather than a universal operator or token
threshold, as the central determinant of consolidation.

\paragraph{Limitations and future work.}
The controlled study isolates representation from retrieval; future work can
jointly learn retrieval, consolidation, and packing in persistent memory.
 
\bibliography{references}

\clearpage
\raggedbottom
\twocolumn[{\centering\LARGE\bfseries Supplementary Material\par\vspace{1.2em}}]
\appendix

\section{Scope of the When--Which Formalization}
\label{app:theory}

The decomposition in Proposition~\ref{prop:decomp} is exact only for the surrogate
$\mathcal U_B$ under Assumption~\ref{ass:uncovered}. It is intentionally not stated
as an identity for binary LLM-judge accuracy. This distinction makes the mechanism
falsifiable: the budget sweep can agree or disagree with its directional prediction
without being used to define the latent fidelity functions. Proposition~\ref{prop:joint}
then formalizes the joint action implied by those operator-specific values. It does
not impose a universal semantic ranking among \textsc{Merge}, \textsc{Abstract},
and \textsc{Rewrite}; the comparison across operators and budgets therefore
determines the ranking empirically.

\begin{proof}[Proof of Proposition~\ref{prop:decomp}]
Fix an operator $o$ and write
$d_o(q)=s_B(q,T_{C,o,B}(\mem))-s_B(q,\mem)$. By
Assumption~\ref{ass:uncovered}, $d_o(q)$ is zero outside $H_{B,o}$ and on the
unchanged part of $H_{B,o}\cap G_B$. On $X_{B,o}$, retention contributes zero and
consolidation contributes $\varphi_{\tilde m_{o,B}}(q)$. On $J_{B,o}$, the
contribution changes from $\varphi_{m_{\rm raw}(q)}(q)$ to
$\varphi_{\tilde m_{o,B}}(q)$. Therefore,
\begin{align}
d_o(q)
&=\mathbf 1\{q\in X_{B,o}\}\varphi_{\tilde m_{o,B}}(q) \\
&\quad+\mathbf 1\{q\in J_{B,o}\}
\bigl(\varphi_{\tilde m_{o,B}}(q)-\varphi_{m_{\rm raw}(q)}(q)\bigr).
\end{align}
Taking expectation under $F$ gives $C_{B,o}+R_{B,o}$ by conditional
expectation. The stated sign conditions follow directly from the two terms.
\end{proof}

\begin{proof}[Proof of Proposition~\ref{prop:joint}]
Retention has surrogate gain zero. Let
$\delta^\star=\max_{o\in\mathcal O_{\rm gen}}\Delta\mathcal U_{B,o}$.
If $\delta^\star\leq 0$, no generative action improves on retention, so the stated
tie rule selects retention. If $\delta^\star>0$, any operator attaining
$\delta^\star$ has greater surrogate utility than retention and no smaller utility
than any other generative action. Selecting such an operator is therefore optimal.
\end{proof}

\subsection{Plug-in Utility Bound}
\label{app:plugin-bound}

The following standard bound explains why action-utility regression is aligned
with the finite when--which decision. It is conditional on estimation accuracy and
does not assert that OAS attains a particular error; that question is evaluated by
held-out experiments.

\begin{proposition}[Utility estimation controls decision regret]
For an instance $z$, suppose every predicted action utility satisfies
$|\widehat\mu_a(z)-\mu_a(z)|\leq\epsilon(z)$. Let $a^\star(z)$ maximize the true
conditional utility and let $\widehat\pi(z)$ maximize the predicted utility. Then
the conditional utility lost by the learned when--which decision is at most
$2\epsilon(z)$:
\begin{equation}
  \mu_{a^\star(z)}(z)-\mu_{\widehat\pi(z)}(z)\leq 2\epsilon(z).
\end{equation}
\end{proposition}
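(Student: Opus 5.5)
The argument is the standard three-term chain comparing true and estimated utilities at the two actions $a^\star(z)$ and $\widehat\pi(z)$. Fix $z$ and abbreviate $a^\star=a^\star(z)$ and $\widehat a=\widehat\pi(z)$. Both lie in the finite action set $\mathcal A$, so the uniform accuracy hypothesis applies to each. The plan is to write
\begin{equation*}
\mu_{a^\star}(z)-\mu_{\widehat a}(z)
=\bigl(\mu_{a^\star}(z)-\widehat\mu_{a^\star}(z)\bigr)
+\bigl(\widehat\mu_{a^\star}(z)-\widehat\mu_{\widehat a}(z)\bigr)
+\bigl(\widehat\mu_{\widehat a}(z)-\mu_{\widehat a}(z)\bigr)
\end{equation*}
and bound the three brackets separately.

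The first and third brackets are each at most $\epsilon(z)$ by the hypothesis $|\widehat\mu_a(z)-\mu_a(z)|\leq\epsilon(z)$, applied at $a=a^\star$ and $a=\widehat a$ respectively. The middle bracket is nonpositive, because $\widehat a$ maximizes $\widehat\mu_\cdot(z)$ over $\mathcal A$ and $a^\star\in\mathcal A$. Adding the three bounds gives the claimed $2\epsilon(z)$.

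The remaining bookkeeping concerns ties and the calibrated variant. Any tie-breaking rule used for $a^\star$ (toward retention, as in Eq.~\eqref{eq:optimal-action}) or for $\widehat\pi$ still yields a maximizer of the respective function, and that is the only property the argument uses. The bound therefore covers Direct OAS as stated. It does not cover the thresholded Calibrated OAS, whose action need not maximize $\widehat\mu$.

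No step is a genuine obstacle. The one point to state explicitly is that $\epsilon(z)$ must bound the error \emph{uniformly} over all four actions, not just the selected ones. The proof never needs correct specification of the ridge model in Eq.~\eqref{eq:erm}, only pointwise accuracy at $z$. Taking expectations over $z$ afterwards also gives the corollary $V(\pi^\star)-V(\widehat\pi)\leq 2\,\mathbb E[\epsilon(z)]$ via Eq.~\eqref{eq:policy-value}.
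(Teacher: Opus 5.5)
Your proof is correct and is the paper's argument: the paper writes the same three comparisons as a chain of inequalities, $\mu_{a^\star}(z)\le\widehat\mu_{a^\star}(z)+\epsilon(z)\le\widehat\mu_{\widehat\pi(z)}(z)+\epsilon(z)\le\mu_{\widehat\pi(z)}(z)+2\epsilon(z)$, rather than as a telescoping sum, but the steps are the same. Your side remarks are accurate and go beyond the paper's proof: tie-breaking does not matter, the thresholded Calibrated OAS is not covered, and averaging over $z$ gives the policy-level corollary.
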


\begin{proof}
The true utility of $a^\star(z)$ is at most its predicted utility plus
$\epsilon(z)$. Predicted maximization makes this no greater than the predicted
utility of $\widehat\pi(z)$ plus $\epsilon(z)$, which is at most the true utility
of $\widehat\pi(z)$ plus $2\epsilon(z)$.
\end{proof}

\section{Reproducibility and Experimental Detail}
\label{app:repro}

This section first reports the paired inference underlying the main operator
comparisons, then collects auxiliary robustness analyses, and finally documents
the shared evaluation protocol and router-capacity checks. All tables use the same
action definitions and budget caps as Sections~\ref{sec:decision}
and~\ref{sec:experiments}.

\subsection{Primary Paired Inference}

Tables~\ref{tab:operatorstats} and~\ref{tab:locomostats} report the
question-level paired inference underlying the main when--which conclusions on
LongMemEval and LoCoMo. The first tests each operator against retention across
budgets; the second verifies the corresponding budget-dependent pattern on the
shorter-evidence benchmark.

\begin{table}[H]
\centering
\small
\begin{tabular}{cccc}
\toprule
Operator & Budget & Paired gain [95\% CI] & one-sided $p$ \\
\midrule
\textsc{Abstract} & 32 & $+0.480$ $[+0.373,+0.582]$ & below $.001$ \\
\textsc{Abstract} & 64 & $+0.333$ $[+0.204,+0.458]$ & below $.001$ \\
\textsc{Abstract} & 128 & $+0.142$ $[+0.044,+0.244]$ & $0.005$ \\
\textsc{Abstract} & 256 & $-0.080$ $[-0.169,+0.005]$ & $0.969$ \\
\midrule
\textsc{Merge} & 32 & $+0.458$ $[+0.347,+0.569]$ & below $.001$ \\
\textsc{Merge} & 64 & $+0.316$ $[+0.191,+0.436]$ & below $.001$ \\
\textsc{Merge} & 128 & $+0.080$ $[-0.036,+0.196]$ & $0.108$ \\
\textsc{Merge} & 256 & $-0.107$ $[-0.204,-0.013]$ & $0.989$ \\
\midrule
\textsc{Rewrite} & 32 & $+0.196$ $[+0.116,+0.280]$ & below $.001$ \\
\textsc{Rewrite} & 64 & $+0.187$ $[+0.067,+0.307]$ & $0.002$ \\
\textsc{Rewrite} & 128 & $+0.098$ $[-0.018,+0.218]$ & $0.061$ \\
\textsc{Rewrite} & 256 & $-0.084$ $[-0.178,+0.009]$ & $0.967$ \\
\bottomrule
\end{tabular}
 \caption{\textbf{Question-level inference for operator performance across budgets.} Each interval
is a 10,000-replicate paired bootstrap over question IDs after averaging the three
label realizations within question. The reported $p$ tests the one-sided beneficial
alternative; for negative effects at 256 tokens, values near one reject benefit rather
than directly reporting the reverse one-sided test. The 32-token Merge cell is the
strict rerun with both a 32-token API ceiling and a lexical hard cap.}
\label{tab:operatorstats}
\end{table}

\begin{table}[t]
\centering
\small
\resizebox{\columnwidth}{!}{%
\begin{tabular}{ccccc}
\toprule
Budget & Operator & Paired gain & 95\% bootstrap interval & Helped/Harmed \\
\midrule
16 & \textsc{Abstract} & $+0.360$ & $[+0.227,+0.480]$ & 30/3 \\
16 & \textsc{Merge}    & $+0.347$ & $[+0.227,+0.467]$ & 29/3 \\
16 & \textsc{Rewrite}  & $+0.133$ & $[+0.013,+0.253]$ & 16/6 \\
32 & \textsc{Abstract} & $+0.173$ & $[+0.040,+0.307]$ & 20/7 \\
32 & \textsc{Merge}    & $+0.253$ & $[+0.120,+0.387]$ & 24/5 \\
32 & \textsc{Rewrite}  & $+0.133$ & $[+0.000,+0.267]$ & 19/9 \\
64 & \textsc{Abstract} & $+0.000$ & $[-0.107,+0.107]$ & 8/8 \\
64 & \textsc{Merge}    & $-0.013$ & $[-0.107,+0.080]$ & 6/7 \\
64 & \textsc{Rewrite}  & $-0.027$ & $[-0.120,+0.067]$ & 6/8 \\
128 & \textsc{Abstract} & $-0.027$ & $[-0.093,+0.053]$ & 3/5 \\
128 & \textsc{Merge}    & $-0.040$ & $[-0.120,+0.040]$ & 3/6 \\
128 & \textsc{Rewrite}  & $+0.000$ & $[-0.093,+0.093]$ & 6/6 \\
\bottomrule
\end{tabular}
}
 \caption{\textbf{Question-level inference for LoCoMo operator performance across budgets.}
The evaluation sample is drawn before outcome generation with fixed seed
20260716, stratified proportionally over four gold-evidence-cost strata. Each row
uses the operator-paired retention realization and 10,000 paired bootstrap samples
over question IDs. Helped and Harmed count questions on which the generated action
is better or worse than retention.}
\label{tab:locomostats}
\end{table}

\subsection{Auxiliary Robustness Tables}
\label{app:robustness-tables}

The following analyses test whether the main when--which conclusions depend on the
answer model, controlled evidence supply, question type, or router class. They are
reported separately because they support robustness rather than define the primary
mechanism evidence.

\begin{table}[t]
\centering
\small
\resizebox{\columnwidth}{!}{%
\begin{tabular}{cccccc}
\toprule
Model & Budget & Retention & Abstract & Paired gain [95\% CI] & Helped/Harmed \\
\midrule
DeepSeek-V3.2 & 32 & 0.040 & 0.520 & +0.480 [+0.373,+0.582] & 49/2 \\
 & 64 & 0.253 & 0.587 & +0.333 [+0.204,+0.458] & 37/10 \\
 & 128 & 0.502 & 0.644 & +0.142 [+0.044,+0.244] & 22/11 \\
 & 256 & 0.751 & 0.671 & -0.080 [-0.169,+0.005] & 8/18 \\
\midrule
GLM-5.2 & 32 & 0.027 & 0.507 & +0.480 [+0.360,+0.600] & 37/1 \\
 & 64 & 0.267 & 0.627 & +0.360 [+0.227,+0.493] & 32/5 \\
 & 128 & 0.453 & 0.707 & +0.253 [+0.120,+0.387] & 24/5 \\
 & 256 & 0.707 & 0.560 & -0.147 [-0.267,-0.027] & 7/18 \\
\bottomrule
\end{tabular}
}
 \caption{\textbf{Cross-model budget crossover} on the same LongMemEval questions.
DeepSeek-V3.2 averages three paired realizations per question; GLM-5.2 uses one
complete realization scored by the fixed DeepSeek-V3.2 judge. Intervals are 10,000 question-level paired bootstrap
replicates; Helped and Harmed denote questions improved or worsened after
averaging available reruns.}
\label{tab:crossmodel}
\end{table}

\begin{table}[t]
\centering
\small
\resizebox{\columnwidth}{!}{%
\begin{tabular}{cccccc}
\toprule
Budget & Retention & Abstract & Paired gain [95\% CI] & Helped/Harmed & Binary oracle \\
\midrule
32 tokens & 0.040 & 0.400 & $+0.360$ $[+0.240,+0.480]$ & 28/1 & 0.413 \\
64 tokens & 0.240 & 0.413 & $+0.173$ $[+0.040,+0.307]$ & 20/7 & 0.507 \\
128 tokens & 0.427 & 0.453 & $+0.027$ $[-0.107,+0.160]$ & 13/11 & 0.600 \\
256 tokens & 0.480 & 0.453 & $-0.027$ $[-0.147,+0.093]$ & 9/11 & 0.600 \\
\bottomrule
\end{tabular}
}
 \caption{\textbf{Full-history lexical-retrieval evaluation} on the complete LongMemEval test
split (one complete realization). Both methods see
the identical retrieved candidate set (top 20, capped at 2,048 tokens); retention
packs candidates in rank order, while \textsc{Abstract} compresses the candidate set into the
target budget. Intervals use 10,000 paired question bootstrap samples; Helped and
Harmed count improved and worsened questions. The binary oracle is descriptive
hindsight, not a trained router.}
\label{tab:retrieved}
\end{table}

\begin{table}[t]
\centering
\small
\resizebox{\columnwidth}{!}{%
\begin{tabular}{ccccc}
\toprule
 & & \multicolumn{3}{c}{Paired gain over retention (helped/harmed, mean per run)} \\
\cmidrule(lr){3-5}
Question type & $n$ & \textsc{Abstract} & \textsc{Merge} & \textsc{Rewrite} \\
\midrule
Single-session & 9 & \textbf{$+0.037$ (0.7/0.3)} & $-0.037$ (0.0/0.3) & $-0.037$ (1.0/1.3) \\
Multi-session & 13 & \textbf{$+0.051$ (2.7/2.0)} & \textbf{$+0.179$ (2.3/0.0)} & $-0.077$ (0.7/1.7) \\
Temporal & 13 & $-0.154$ (1.7/3.7) & $-0.051$ (2.0/2.7) & \textbf{$+0.051$ (2.3/1.7)} \\
Knowledge-update & 40 & $-0.125$ (2.0/7.0) & $-0.233$ (1.0/10.3) & $-0.142$ (2.3/8.0) \\
\midrule
\emph{All} & 75 & $-0.080$ (7.0/13.0) & $-0.107$ (5.3/13.3) & $-0.084$ (6.3/12.7) \\
\bottomrule
\end{tabular}
}
 \caption{\textbf{Operator performance by question type at the loose budget}
(256 tokens, mean over three label realizations). Each cell is the paired gain over
retention with helped and harmed counts per realization. Harm concentrates in
knowledge-update and temporal questions, while multi-session questions provide the
main exception. These estimates are descriptive complements to the aggregate
when--which comparisons.}
\label{tab:pertype}
\end{table}

\begin{table}[t]
\centering
\small
\resizebox{\columnwidth}{!}{%
\begin{tabular}{ccccc}
\toprule
Policy & 32 tokens & 64 tokens & 128 tokens & 256 tokens \\
\midrule
Retention & 0.039 (0) & 0.261 (0) & 0.521 (0) & 0.745 (0) \\
Training-selected fixed operator & 0.493 (5) & 0.569 (13) & 0.631 (14) & 0.631 (26) \\
Evidence-fit rule & 0.493 (5) & 0.563 (12) & 0.681 (3) & 0.729 (4) \\
$k$-NN utility router & 0.487 (4) & 0.556 (14) & 0.618 (13) & 0.747 (6) \\
Direct utility (ours) & 0.545 (3) & 0.618 (12) & 0.640 (14) & 0.745 (10) \\
Calibrated-safe (ours) & 0.487 (1) & 0.403 (3) & 0.613 (2) & 0.742 (1) \\
\bottomrule
\end{tabular}
}
 \caption{\textbf{Routing baselines under the same grouped cross-fitting protocol.}
Cells report accuracy with the number of harmful generated choices in parentheses.
All learned hyperparameters use only outer-training questions; only the frozen
safe router's threshold uses the calibration fold.}
\label{tab:routerbaselines}
\end{table}

\subsection{Shared Protocol and Statistical Units}

\paragraph{Fixed configuration.} Primary operator and router runs use
DeepSeek-V3.2 (temperature $0$) behind an OpenAI-compatible endpoint and
text-embedding-3-small for the geometry features. Retriever packing,
answer prompt, and grader prompt are held fixed across every policy, budget, and
operator; only the memory state varies. The cross-model table reruns the complete
retain-versus-abstract comparison with GLM-5.2, the same prompts, one paired
realization per question, and the fixed primary DeepSeek-V3.2 judge. We partition the $500$-item
LongMemEval\_S into disjoint train ($350$), dev ($75$), and test ($75$) splits
before the reported operator comparisons. The budget and operator sweeps cover the
entire test partition; no question is filtered by outcome, evidence length, or operator
success. Question-level paired bootstrap and sign-flip inference are computed from
the cached outcomes. The
utility-router table is a grouped five-fold cross-fitted estimate on the controlled
evaluation matrix: for each rotation, three folds train, one disjoint fold calibrates thresholds,
and the last fold tests; all budgets of a question remain in one fold.

\paragraph{Independent units and claim precision.}
The four budgets and three label realizations are repeated measurements, not
additional independent samples. We first average realizations within question and
then perform inference across the held-out question IDs. Consequently, the paper uses
the evaluation to support effects whose question-bootstrap intervals exclude zero, such
as the tight-budget gain and the advantage of cross-note fusion over local
\textsc{Rewrite}. It reports the point-estimate ordering of \textsc{Abstract} and
\textsc{Merge} but does not claim dominance because their paired intervals include
zero. The multilayer perceptron (MLP) capacity check follows the same rule. This explicit hierarchy avoids
pseudoreplication while retaining every available test question.

\paragraph{Budgets, operators, and label realizations.} For LongMemEval, we evaluate
budgets of 32, 64, 128, and 256 tokens.
\textsc{Abstract} emits a higher-level summary, \textsc{Merge} emits a compact
factual union, and \textsc{Rewrite} reformulates notes separately before packing
them cheapest-first. Every resulting representation is capped at the target
budget. Because the
answerer and grader are stochastic across reruns, the budget crossover and the
comparison of operators across budgets relabel the test split three times, and the
breakdown by question type and operator uses the same three realizations, all reported with
across-run means. Operator gains use each file's within-run retention pair. For the
unified action comparison, each action's utility is its three-run mean and retention is
the mean over all nine paired reruns, avoiding the invalid combination of one
operator file with another file's stochastic retention label.

\begin{table}[t]
\centering
\small
\resizebox{\columnwidth}{!}{%
\begin{tabular}{cccccc}
\toprule
$B$ & Mean fit & Full (\%) & Zero (\%) & Partial (\%) & Regime \\
\midrule
32  & .065 & 1.3  & 78.7 & 20.0 & Severely tight \\
64  & .424 & 9.3  & 10.7 & 80.0 & Tight \\
128 & .718 & 52.0 & 1.3  & 46.7 & Transition \\
256 & .911 & 85.3 & 0.0  & 14.7 & Loose \\
\bottomrule
\end{tabular}
}
 \caption{\textbf{Budget pressure on the controlled LongMemEval test split.}
Fit fraction is the share of gold evidence turns packed by cheapest-first
retention. ``Full,'' ``zero,'' and ``partial'' report the percentage of questions for which all, none, or some of the
gold turns fit. Raw-turn cost uses the same alphanumeric-token count as dataset
construction.}
\label{tab:budgetpressure}
\end{table}

\begin{table}[t]
\centering
\scriptsize
\resizebox{\columnwidth}{!}{%
\begin{tabular}{cccc}
\toprule
$B$ & Merge $-$ Abstract & Abstract $-$ Rewrite & Merge $-$ Rewrite \\
\midrule
32  & $-.013$ [$-.093,+.067$] & $+.298$ [$+.196,+.396$] & $+.284$ [$+.173,+.396$] \\
64  & $-.009$ [$-.080,+.058$] & $+.133$ [$+.027,+.240$] & $+.124$ [$+.018,+.227$] \\
128 & $-.022$ [$-.093,+.049$] & $+.027$ [$-.062,+.120$] & $+.004$ [$-.089,+.098$] \\
256 & $-.031$ [$-.111,+.049$] & $+.018$ [$-.062,+.098$] & $-.013$ [$-.107,+.080$] \\
\bottomrule
\end{tabular}
}
 \caption{\textbf{Direct operator-to-operator comparisons.} Entries are paired
differences in answer accuracy after averaging the three realizations within
question, with 95\% question-bootstrap intervals. Positive values favor the
row's first named operator. These comparisons reuse cached outcomes and require
no additional model calls beyond the strict 32-token Merge rerun.}
\label{tab:operatorpairwise}
\end{table}

\paragraph{Unified router.} The low-capacity router fits one ridge utility response
per action and selects the action with the highest predicted utility. Ridge regularization is selected by
three-fold grouped CV entirely inside each outer training partition. The resulting
generated-action selector and its predicted advantage over retention are then
frozen; only a per-budget threshold is selected on the disjoint calibration fold.
No test question informs the utility model, hyperparameter, or threshold. This
estimates within-benchmark generalization; it is not a population safety
certificate.

\subsection{Nonlinear Router Capacity Check}
\label{app:mlpcapacity}

We test whether the linear utility heads merely underfit a nonlinear action
boundary. The comparison uses a one-hidden-layer ReLU MLP with four sigmoid
utility outputs as a minimal nonlinear control. Hidden widths of 4, 8, and 16,
weight-decay values of 0.001, 0.01, and 0.1, and the early-stopping epoch are selected by
three-fold question-grouped CV wholly inside each outer-training partition. It is
then refit on those training questions and evaluated on the untouched outer test
fold; the outer calibration fold is unused. Thus the 300 budget rows are never
treated as 300 independently split samples.

\begin{table}[H]
\centering
\small
\resizebox{\columnwidth}{!}{%
\begin{tabular}{cccc}
\toprule
Budget & Ridge accuracy & MLP accuracy & Difference from ridge [95\% CI] \\
\midrule
32 & 0.545 & 0.529 & $-0.016$ $[-0.049,+0.007]$ \\
64 & 0.618 & 0.604 & $-0.013$ $[-0.040,+0.004]$ \\
128 & 0.640 & 0.644 & $+0.004$ $[-0.027,+0.036]$ \\
256 & 0.745 & 0.708 & $-0.037$ $[-0.102,+0.024]$ \\
\bottomrule
\end{tabular}
}
 \caption{\textbf{Nonlinear capacity check.} Out-of-fold action accuracy under the
same five-fold question-grouped protocol. Intervals are paired question-bootstrap
intervals for the difference between MLP and ridge. None excludes zero, and the
two-sided sign-flip tests do not identify a significant difference at any budget.}
\label{tab:mlpcapacity}
\end{table}

The MLP has absolute accuracy $1.6\%$ and $1.3\%$ lower than ridge at 32 and 64
tokens, $0.4\%$ higher at 128 tokens, and $3.7\%$ lower at 256 tokens; no difference is statistically
distinguishable. We therefore retain ridge as the primary lightweight learner.
This choice is empirical and variance-aware, not an attempt to avoid neural
methods. The purpose is to test whether basic nonlinearity changes the result,
rather than to conduct a separate architecture search. Under the common grouped
protocol, added nonlinear capacity does not provide reliable additional utility
and is less attractive for harm calibration.

\subsection{Full-History Retrieval Stress Test}

\paragraph{Controlled setting.} Throughout, ``retention,'' each operator, and the
oracle gate receive the \emph{same} gold evidence per question; the study measures
the consolidation decision in isolation. The separate retrieval diagnostic starts
from every test question's full chronological memory pool. It uses a local,
deterministic implementation of the lexical retriever defined in the main setup, with a term-frequency saturation parameter of
1.2 and a length-normalization parameter of 0.75, retrieves the 20 highest-ranked
dialogue turns, and greedily applies a common 2,048-token candidate cap. From this
same capped ranked list, retention packs complete turns into $B$, while abstraction
generates one record capped to $B$; answer and judge prompts are shared. The
complete split contains 9 single-session, 13 multi-session, 13 temporal, and 40
knowledge-update questions. We report one realization at
the four budgets of 32, 64, 128, and 256 tokens and bootstrap paired question
differences 10,000 times.
This is a retrieval stress test, not a tuned retriever or full-system leaderboard
comparison.

\subsection{Independently Split Full-History Four-Action Evaluation}
\label{app:routeA}
This stricter transfer test uses a type-stratified split of previously unused
LongMemEval training questions into disjoint fit, calibration, and gate
partitions. The lexical retriever supplies the same top 20 turns to retention,
\textsc{Abstract}, \textsc{Merge}, and \textsc{Rewrite} under a common 2,048-token
candidate cap. Generation, answering, and fixed judging use DeepSeek-V3.2 at
temperature zero. Router features exclude gold-evidence indicators, and all model
choices and calibration thresholds are frozen before the gate outcomes are
aggregated. This transfer test uses its own prespecified harm cap, so its safe
threshold is distinct from the zero-observed-harm calibration used in the
controlled cross-fitting experiment.

Table~\ref{tab:routeAgate} reports one complete realization. Fixed
\textsc{Merge}, tied with the per-budget policy selected on fit data, obtains the
highest macro accuracy of $45.7\%$. Direct ridge reaches $44.7\%$, while the frozen
safe router reaches $39.3\%$ and reduces harmful generated choices from nine to
six relative to fixed \textsc{Merge}. The hindsight action oracle reaches
$50.3\%$, so heterogeneous action value exists, but the safe pre-generation router
does not rank it reliably on the independent full-history split. This result is
the basis for the paper's restricted routing claim: the controlled evaluation
supports learnable when--which structure, whereas transferable end-to-end routing
is not established.

\begin{table}[H]
\centering
\small
\resizebox{\columnwidth}{!}{%
\begin{tabular}{cccccc}
\toprule
Policy & 32 tokens & 64 tokens & 128 tokens & 256 tokens & Macro \\
\midrule
Retention & .173 (0) & .267 (0) & .387 (0) & .480 (0) & .327 \\
Fixed abstract & .373 (2) & .400 (4) & .480 (1) & .453 (6) & .427 \\
Fixed merge & .387 (2) & .453 (2) & .467 (2) & .520 (3) & \textbf{.457} \\
Fixed rewrite & .320 (2) & .440 (1) & .440 (2) & .480 (3) & .420 \\
Per-budget policy selected on fit data & .387 (2) & .453 (2) & .467 (2) & .520 (3) & \textbf{.457} \\
Direct ridge & .347 (2) & .440 (2) & .480 (1) & .520 (2) & .447 \\
Frozen safe router & .347 (2) & .307 (2) & .440 (0) & .480 (2) & .393 \\
Action oracle & .440 (0) & .520 (0) & .493 (0) & .560 (0) & .503 \\
\bottomrule
\end{tabular}
}
 \caption{\textbf{Independent full-history four-action evaluation} (one label realization).
Cells are accuracy with harmful generated choices
in parentheses. Macro averages the four budget accuracies. All fit choices and
calibration thresholds are frozen before gate aggregation; the action oracle is
descriptive hindsight.}
\label{tab:routeAgate}
\end{table}
 
\end{document}